\definecolor{visioBlue}{RGB}{0,112,192}
\definecolor{visioGreen}{RGB}{0,176,80}
\definecolor{visioRed}{RGB}{255,0,0}
\definecolor{visioYellow}{RGB}{255,192,0}
\crefname{hypothesis}{Hypothesis}{Hypotheses}
\begin{document}

\newcommand\relatedversion{}

\title{\Large Unpacking Hateful Memes: Presupposed Context and False Claims\relatedversion}


\author{
Weibin Cai\thanks{Data Lab, Department of EECS, Syracuse University, New York, USA. Emails: \email{weibin44@data.syr.edu}, \email{reza@data.syr.edu}.}
\and Jiayu Li\thanks{Amazon Web Services. Email: \email{jlijiayu@amazon.com}}
\and Reza Zafarani\footnotemark[1]
}

\date{}

\maketitle
\newcommand{\pmval}[1]{{\scriptsize{$\pm #1$}}}


\fancyfoot[R]{\scriptsize{Copyright \textcopyright\ 2026 by SIAM\\
Unauthorized reproduction of this article is prohibited}}





\begin{abstract} While memes are often humorous, they are frequently used to disseminate hate, causing serious harm to individuals and society. Current approaches to hateful meme detection mainly rely on pre-trained language models. However, less focus has been dedicated to \textit{what makes a meme hateful}. Drawing on insights from philosophy and psychology, we argue that hateful memes are characterized by two essential features: a \textbf{presupposed context} and the expression of \textbf{false claims}. To capture presupposed context, we develop \textbf{PCM} for modeling contextual information across modalities. To detect false claims, we introduce the \textbf{FACT} module, which integrates external knowledge and harnesses cross-modal reference graphs. By combining PCM and FACT, we introduce \textbf{\textsf{SHIELD}}, a hateful meme detection framework designed to capture the fundamental nature of hate. Extensive experiments show that SHIELD outperforms state-of-the-art methods across datasets and metrics, while demonstrating versatility on other tasks, such as fake news detection.\\  

\vspace{-2mm}

\noindent \textcolor{red}{\textbf{Disclaimer:} \textit{This paper contains discriminatory content that may be disturbing to some readers.}} 

\end{abstract}

\section{Introduction.}
\label{sec:introduction}
\textit{Meme}, a term introduced by Richard Dawkins in 1976 from the Greek \textit{mimeme} (``imitation''), refers to units of cultural information, such as ideas and behaviors. With the rise of the internet, ``meme'' now often refers to humorous or satirical image-text combinations that convey individual ideologies and rapidly evolve online. 

However, in some cases, the rapid spread of memes has been exploited to disseminate hate, reinforcing societal biases and threatening social harmony~\cite{pandiani2024toxic}. At the individual level, exposure to hate speech has been shown to harm mental health~\cite{wachs2022online,saha2019prevalence,pluta2023exposure,impacthate}. At the societal level, hate speech can deepen social divisions~\cite{italian2025hate}, and studies have shown its association with real-world criminal behavior~\cite{hatecrime,hateoffline,lewis2019online}. To mitigate these risks, internet companies are legally required to comply with national regulations~\cite{uk2023law,france2020law,germany2017law,aus20law,eu25}; for instance, France mandates the removal of hateful content within 24 hours or imposes fines of up to €1.25 million~\cite{france2020law}. Figure~\ref{fig:example} shows an example of a hateful meme, implicitly reflecting the stereotype that Black people are inherently more violent and criminal, and the resulting hate toward the Black community.


\begin{figure}[t]
  \centering
  \includegraphics[scale=0.225]{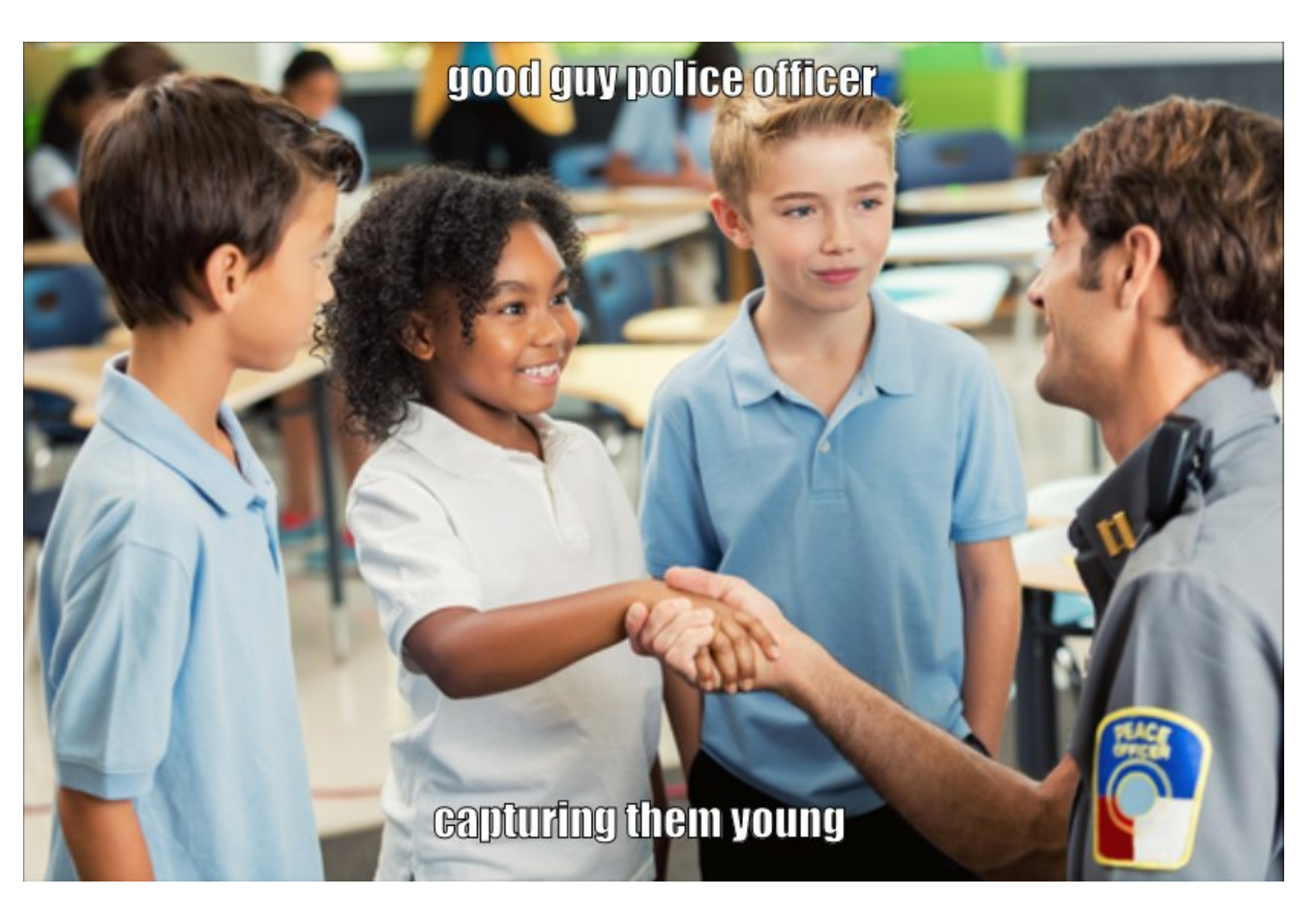} 
  \vspace{-4mm}
  
  \caption{\small An example of a hateful meme. Meme text: \textit{good guy police officer, capturing them young}.}
  \vspace{-5mm}
  
  \label{fig:example}
\end{figure}

\begin{figure*}[!th]
    \centering
    \begin{subfigure}{0.32\textwidth}
        \includegraphics[width=\linewidth]{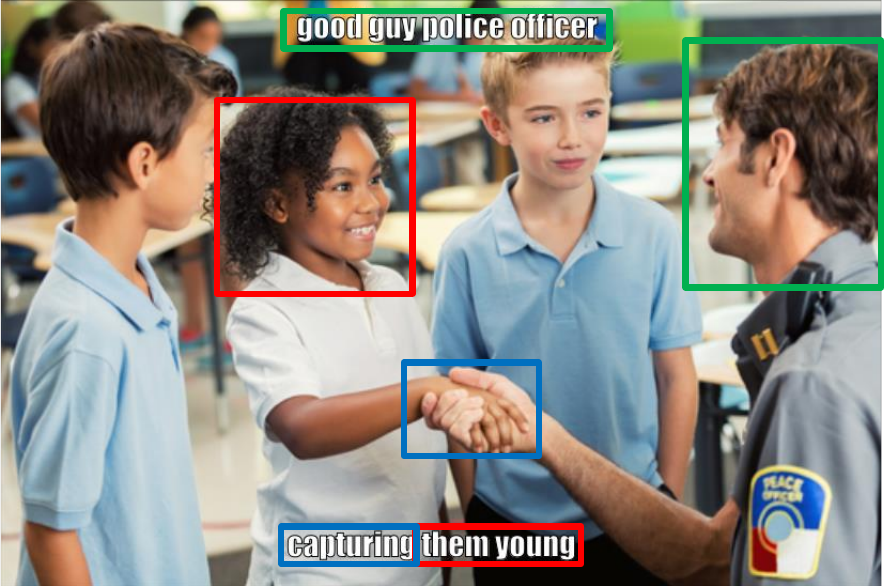}
        \caption{Presupposed context $+$ false claims}
        \label{fig:example1}
    \end{subfigure}
    \hfill
    \begin{subfigure}{0.32\textwidth}
        \includegraphics[width=\linewidth]{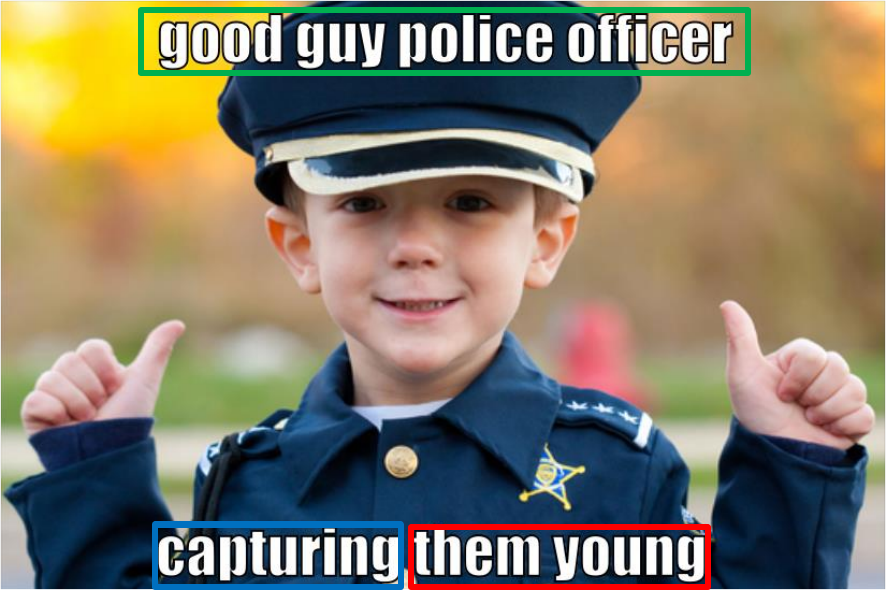}
        \caption{Only presupposed context}
        \label{fig:example2}
    \end{subfigure}
    \hfill
    \begin{subfigure}{0.32\textwidth}
        \includegraphics[width=\linewidth]{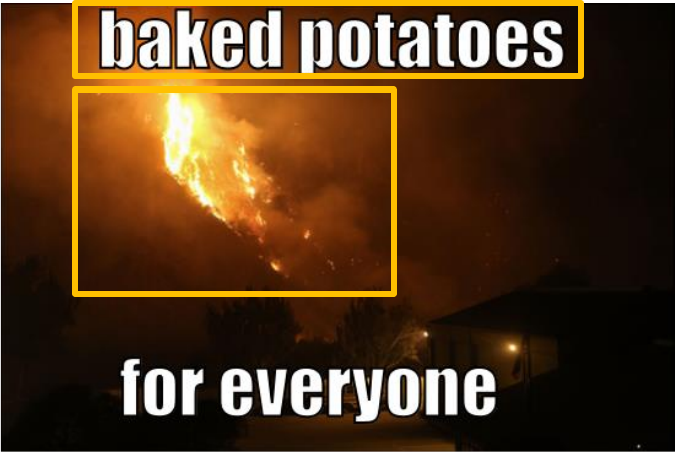}
        \caption{Only false claim}
        \label{fig:example6}
    \end{subfigure} 
    \vspace{-3mm}
    \caption{\scriptsize Examples of how memes express hate through \textbf{presupposed context} and \textbf{false claims}. Figures~\ref{fig:example1} and~\ref{fig:example2} share the text \textit{``good guy police officer, capturing them young.''} \textcolor{visioBlue}{Blue boxes} indicate elements reflecting presupposed context; \textcolor{visioGreen}{green boxes} mark entities framed as ``good''; \textcolor{visioRed}{red boxes} mark those framed as ``bad''; and \textcolor{visioYellow}{yellow boxes} denote text--image referential links. In Figures~\ref{fig:example1} and~\ref{fig:example2}, ``capturing'' conveys an evaluative context in which the white police officer is framed as ``good'', while ``them young'' refers to the Black child as the ``bad'' group. Figure~\ref{fig:example1} is hateful because it also contains false claims: ``capturing'' contradicts the friendly handshake, and the meme misleadingly reinforces the stereotype that Black children are inherently criminal. By contrast, Figure~\ref{fig:example2} lacks explicit false claims, so presupposed context alone is insufficient for hatefulness. Figure~\ref{fig:example6} contains a false claim, describing a volcanic eruption as ``baked potatoes,'' but lacks presupposed context and is better interpreted as dark humor.}
    \vspace{-6mm}
    \label{fig:examples}
\end{figure*}
To combat the spread of hateful memes, Facebook launched \textit{Hateful Meme Challenge}
to promote research on hateful meme classification. Most existing methods rely on pretrained models, especially pretrained language models~\cite{zhou2021multimodal,muennighoff2020vilio,kumar2022hate,burbi2023mapping,cao2023pro}. These methods leverage the unique reasoning capabilities of different models, and optimize hateful meme classification task according to each model's design principles. For instance, PromptHate~\cite{cao2023prompting} employs image-to-text techniques~\cite{mokady2021clipcap,li2023blip} to convert the visual part of memes into captions. This transforms the multimodal task into a pure text classification, enabling the use of pretrained language models~\cite{liu2019roberta}. Since Large Language Models (LLMs) tend to interpret hateful memes as non-hateful, ExplainHM~\cite{lin2024towards} addresses this limitation by generating multi-perspective explanations to improve detection. These methods adapt the detection task to pretrained models' characteristics, achieving strong performance. Despite increasing attention, few studies have explored the fundamental nature of hateful memes—that is, \textit{what renders a meme hateful from psychological and philosophical perspectives?}

In contrast to prior approaches, our goal is to uncover the essence of hateful memes through the lens of \textit{how hate is expressed}, and to design a model informed by these insights. Viewed from this perspective, hateful meme classification faces three key challenges: \textcircled{1} Variation in hate intensity and rhetorical forms (e.g., dehumanizing or an ethnic slur);  \textcircled{2} Limited textual context and intricate interaction between text and image; \textcircled{3} The need for cultural and historical knowledge to recognize hate. To address the above challenges, we ground our modeling in philosophical and psychological theories of hate and hate speech. For \textcircled{1}, we draw on \textit{Marques}' interpretation of hate speech~\cite{marques2023expression}, which posits that all forms of hate share a \textbf{presupposed evaluative context}, i.e., assumptions about who is considered good or bad. For \textcircled{2} and \textcircled{3}, we refer to \textit{Vendrell}, who argues that malicious and ideological hate often relies on vague or false reasoning~\cite{vendrell2021hate}. Accordingly, we identify \textbf{false claims} as a means of expressing hate, manifested in two ways: (i) \textit{Incorrectness}: The text misrepresent the image, creating a mismatch detectable through referential links between textual tokens and visual patches; (ii) \textit{Deliberately misleading}: Embedding stereotypes within text–image interactions to induce biased beliefs, recognition of which requires broader societal knowledge.

To better understand these concepts, consider Figure~\ref{fig:example1} as an example: (1) The presupposed context is often conveyed through verbs, adjectives or actions depicted in the image. For example, the words ``capturing'' and ``good'' reflect an implicit value judgment: the white police officer is framed as the ``good'', while ``them''---the individuals being captured---are implicitly ``bad''. Furthermore, the contrast between ``capturing'' and the friendly atmosphere in the image suggests irony as a means of expressing hate. (2) Multiple referential links exist between the text and the image. The three colored boxes correspond to three types of referential mappings: the red box links ``them young'' and the Black child, the green box connects the ``good guy police officer'' to the white officer, and the blue box links ``capturing'' to the friendly handshake. The mismatch between ``capturing'' and the amicable interaction reveals the incorrectness inherent in the false claim. However, while the presupposed context and the referential links suggest an oppositional relationship between the Black child and the white police officer, this alone is insufficient to determine that the meme is hateful. We also need (3) an understanding of the potential stereotypes underlying this relationship. Referring to the Black community as ``them'' deliberately conveys the impression that the entire group is inherently violent or criminal. When combined with societal tension between Black individuals and White police officers, these elements render the meme interpretable as hateful. Such societal knowledge is critical: if the Black child were replaced with a white child, or the white officer with a Black officer, the oppositional structure might still remain, but its connotation would be unclear, and the perceived level of hate would likely be reduced. It is important to note that \textit{the presence of either a presupposed context or false claims alone does not suffice to indicate a hateful meme.} When only a presupposed context exists without a false claim, the meme may simply express a legitimate viewpoint---for example, Figure~\ref{fig:example2} suggests that police action against young offenders is justified. Conversely, when false claims appear without a presupposed context, the meme often resembles dark humor or fake news, as in Figure~\ref{fig:example6}, which describes a volcanic eruption as handing out ``baked potatoes.''

\noindent \textbf{Present work.} We unpack the problem from the perspective of \textit{how hate is expressed}. We propose that hate in memes is often conveyed through \textbf{presupposed context} and \textbf{false claims}, and we design corresponding modules to model these two aspects: (1) \textbf{Presupposed Context Module (PCM)} encodes intra-modal context and fuses cross-modal contextual information to determine whether a meme conveys an implicit value judgment. (2) \textbf{False Claims Module (FACT)} addresses falsehoods in memes via two sub-modules: \textbf{Social Perception Module (SPM)} introduces external societal knowledge to identify deliberately misleading content. Building on this, we further propose the \textbf{Cross-modal Reference Module (CRM)}, which explicitly constructs the reference relations between image and text to detect semantic incorrectness. Integrating PCM and FACT, we propose \textbf{\textsf{SHIELD}} (\textbf{S}peech \textbf{H}ate
\textbf{I}dentification through
\textbf{E}valuative context and
Falsehoo\textbf{D}), a framework that enhances hateful meme classification by capturing the core features of hate in memes. Our contributions can be summarized as follows: 
\begin{itemize}
    \item We draw on philosophy and psychology to examine the essence of hateful memes via the lens of \textit{the expression of hate}, arguing that a meme is perceived as hateful primarily because it exhibits two expressive characteristics: \textbf{presupposed context} and \textbf{false claims}. (Section~\ref{sec:introduction} and ~\ref{sec:PCFACT})
    
    \item Based on these characteristics, we design the \textbf{PCM} and \textbf{FACT} modules, and integrate them into the \textbf{\textsf{SHIELD}} framework, whose effectiveness is validated against multiple baselines. (Section~\ref{sec:PCFACT}, ~\ref{sec:rq1})
    
    \item We evaluate model specialization and generalization across different hate targets (Appendix~\ref{app:sepcificity}) and assess \textbf{\textsf{SHIELD}} on fake news classification to demonstrate its versatility and generalization (Section~\ref{sec:rq5}).
\end{itemize} 

\section{Related Work.} 

Following Vendrell’s framework of hate~\cite{vendrell2021hate}, we focus on \textit{malicious hate} and \textit{ideological hate}, which are the forms most commonly studied in hate speech detection. Hate speech is generally defined by institutions and platforms as content that conveys negative sentiment toward a target group~\cite{eu2008,un,meta2025,x2023,youtube2025}. In this work, hateful memes are treated as a multimodal form of hate speech, with memes serving as the medium.

\noindent \textbf{Hateful Meme Detection.} 
The \textit{Hateful Memes Challenge}~\cite{kiela2020hateful} established a standard benchmark and showed that unimodal models such as BERT~\cite{devlin2018bert} and Faster R-CNN~\cite{ren2016faster} perform substantially worse than multimodal models such as VisualBERT~\cite{li2019visualbert}, motivating later work on multimodal fusion~\cite{zhang2020hateful,suryawanshi2020multimodal,kumar2022hate}. Since hateful memes often require social and cultural context, many studies fine-tune pretrained multimodal models for the task~\cite{lippe2020multimodal,zhu2020enhance,zhou2021multimodal,velioglu2020detecting,pramanick2021momenta}. Another line of work converts images into captions and feeds caption-text pairs into PLMs~\cite{cao2023prompting,ji2023identifying,mokady2021clipcap,liu2019roberta}. However, caption-based approaches are sensitive to caption quality and may omit critical cues such as race or gender~\cite{cao2023pro}. To address this issue, Pro-Cap uses targeted prompting with BLIP-2~\cite{cao2023pro,li2023blip}, while ISSUES maps images into single-word tokens through textual inversion~\cite{burbi2023mapping,baldrati2023zero}. More recently, RGCL improves discrimination through contrastive learning~\cite{mei2024improving}, ExplainHM introduces LLM-generated explanations~\cite{lin2024towards,touvron2023llama}, and Mod-HATE studies low-resource adaptation with task-specific LoRAs~\cite{cao2024modularized}.

  

\noindent\textbf{MLLMs for image classification.}
MLLMs~\cite{liu2024visual,liu2024llava,li2023blip,bai2023qwen} have recently been used for image classification in two main ways. One line relies on decoded text, such as captions or inferred statements, which are then passed to another language model for prediction~\cite{lin2024towards}. The other uses latent representations directly. Prior work shows that MLLMs retain useful information in their latent space even when it is not effectively decoded in textual outputs~\cite{zhang2024visually}. Motivated by this finding, we leverage latent MLLM features for hateful meme classification.


\section{Problem Statement}
\label{sec:ps}
Given a multimodal meme $\mathcal{M}=\{\mathcal{I}, \mathcal{T}\}$, where $\mathcal{I}$ denotes an image and $\mathcal{T}$ a text sequence, the goal of hateful meme detection is to determine whether $\mathcal{M}$ conveys hate by jointly leveraging visual and textual information.

We approach this task from the perspective of \textit{how hate is expressed}. Drawing on insights from psychology and philosophy, we seek to identify the expressive mechanisms through which a meme $\mathcal{M}$ communicates hate, and to design models that capture features aligned with these mechanisms. However, this expression-oriented view introduces three major challenges:
\vspace{-1mm}
\begin{itemize}
    \item[\textbf{C1}] \textbf{Diversity of hateful expression.} Hate can be conveyed explicitly or implicitly, encompassing varied rhetorical strategies and affective tones, such as vilification or mockery;
    
    \item[\textbf{C2}] \textbf{Dependence on sociocultural knowledge.} Hate speech often reinforces stereotypes about specific groups, requiring background knowledge to distinguish legitimate commentary from distorted or exaggerated attacks; and
    
    \item[\textbf{C3}] \textbf{Complexity of meme representation.} Meme text is typically brief and relies on the accompanying image for interpretation, demanding effective fusion of multimodal cues. 
\end{itemize}



\section{\textsf{SHIELD}: Presupposed Context \& False Claim Hateful Meme Detection Framework.}
\label{sec:PCFACT}

\begin{figure*}[thbp]
  \centering
  \includegraphics[width=\textwidth]{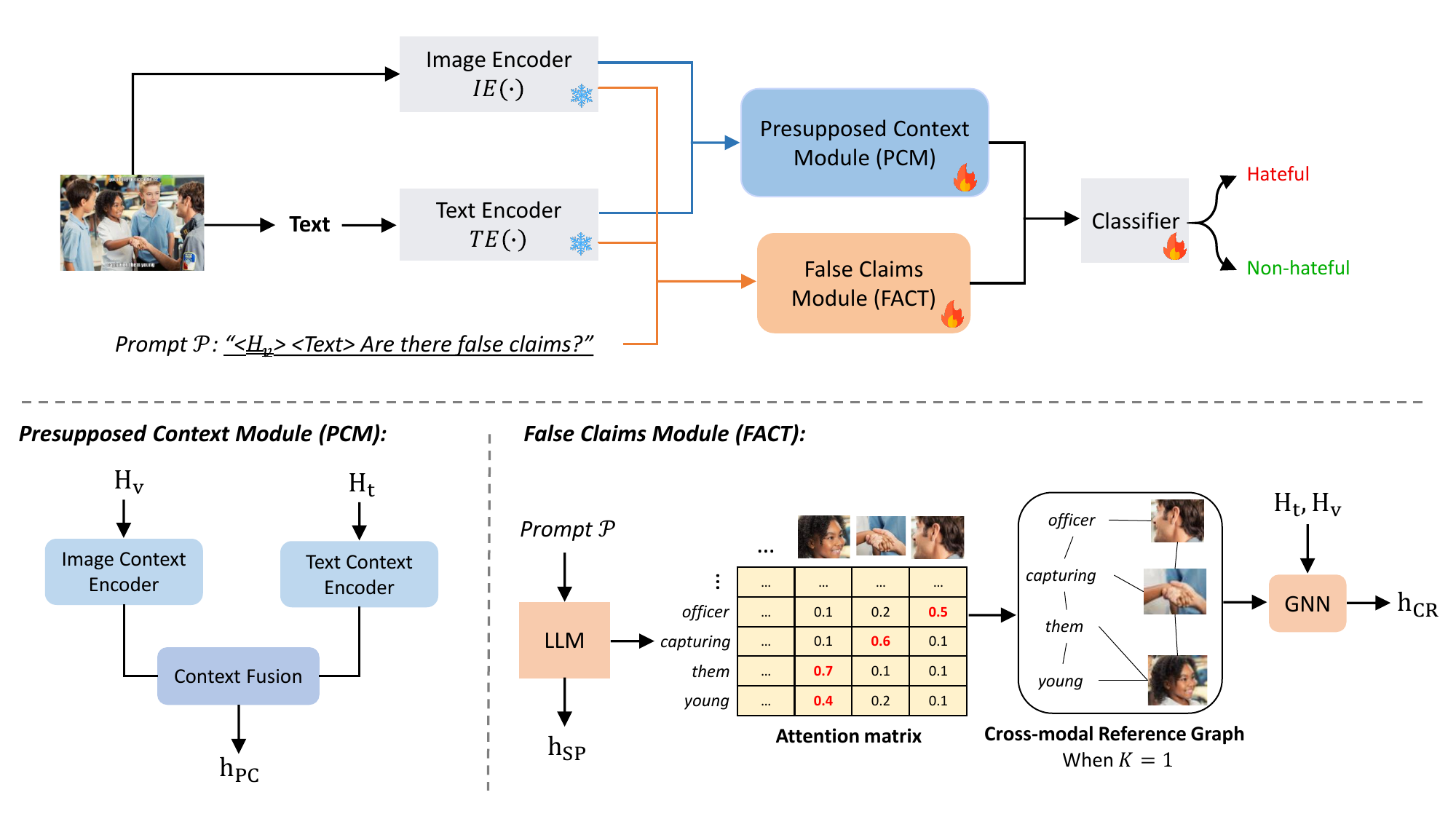} 
  
  \vspace{-7mm}
  \caption{\scriptsize \textsf{SHIELD} framework. Given a meme image and its text, we first obtain patch and token embeddings $H_v$ and $H_t$ from $IE(\cdot)$ and $TE(\cdot)$. (1) In PCM, image and text context encoders model intra-modal interactions to extract modality-specific context features, which are fused to produce the meme context embedding $h_{PC}$. (2) In FACT, $H_v$, $H_t$, and a prompt $\mathcal{P}$ are fed into the LLM, which outputs the last hidden state $h_{SP}$ and an attention matrix. The attention matrix is then used to construct a cross-modal reference graph, which is processed by a GNN to obtain the reference graph embedding $h_{CR}$. Flame icons denote trainable modules, and snowflake icons denote frozen modules.}
  \vspace{-5mm}
  \label{fig:framework}
\end{figure*}

To tackle these three challenges, we construct \textbf{SHIELD} based on the essential characteristics that distinguish hateful memes. The key distinction between hateful memes and humorous or satirical memes lies in intent: hateful memes aim to amplify and disseminate negative stereotypes or prejudices against a specific group, contributing to their social exclusion. Creators of hateful memes employ various strategies, which, from philosophical and psychological perspectives, these expressions in hateful memes typically manifest as \textbf{presupposed context} and \textbf{false claims}:

\noindent\textbf{Hateful meme contains presupposed context (PC).} From a philosophical viewpoint, Marques~\cite{marques2023expression} argues that hate speech is not merely the expression of negative emotions such as anger or contempt, but rather a normative emotional stance (such as the claim that a certain group ought to be excluded). By presupposing the appropriateness of such emotions, hate speech injects value judgments into the context. Social psychology provides a complementary explanation, Tajfel’s Social Identity Theory~\cite{mcleod2023social} posits that individuals categorize others into in-groups and out-groups, generally viewing their own in-group more positively while perceiving out-groups as neutral or negative, thereby enhancing self-image and gaining psychological satisfaction. Linguistically, this aligns with the concept of 'othering language'~\cite{alorainy2019enemy}, i.e., using language to construct difference, exclusion, and denigration, exemplified by 'capturing them young' in Figure~\ref{fig:example1}. Therefore, for \textbf{C1}, despite variations in degree and rhetorical form, all types of hate share a common key feature: \textit{the presence of a presupposed evaluative context}. In \textsf{SHIELD}, this is modeled via \textbf{PCM} in Section~\ref{sec:PCM}.

\noindent\textbf{Hateful meme contains false claims (FC).} Presupposed context alone is insufficient to determine hate. For example, Figure~\ref{fig:example2} merely illustrates an act of justice. To legitimize hate, memes often ascribe unfounded accusations or misrepresentations against the targeted group. Vendrell~\cite{vendrell2021hate} classifies hate into different types. In practice, hateful memes mostly fall under malicious and ideological hate, where the focus is indeterminate, and reasoning is driven by ideology or bias rather than facts. Consequently, the logic behind hateful messages is often vague or false. Therefore, false claims often function as a strategy of attack. False claims in memes primarily manifest in two ways: (1) Incorrectness: incorrect references between the image and textual content. (2) Deliberately misleading: false guidance about target groups, requiring societal knowledge to recognize (\textbf{C2}). This is implemented via the \textbf{SPM} in Section~\ref{sec:SPM}. To capture the incorrectness(\textbf{C3}), we introduce \textbf{CRM} in Section~\ref{sec:CRM}, which explicitly models the associations between image and text.

\subsection{Presupposed Context Module.}
\label{sec:PCM}
A meme’s context comprises both image and text, and capturing the presupposed context requires fusing information across modalities. Prior work has shown that PLMs implicitly encode syntactic structures~\cite{hewitt2019structural}, which can capture the presupposed context in certain degree~\cite{burnap2016us,alorainy2019enemy}. We first encode the image $\mathcal{I}$ and text $\mathcal{T}$ using an image encoder $IE(\cdot)$ and a text encoder $TE(\cdot)$, producing patch and token embeddings $H_v$ and $H_t$. Mean pooling over these embeddings yields the final image embedding $h_v$ and text embedding $h_t$:
\begin{equation}
    H_v = IE(\mathcal{I}), ~~H_t = TE(\mathcal{T})
    \label{eq:encoders}
\end{equation}
\vspace{-11mm}

\begin{equation}
    h_v = \text{Mean}(H_v),~~h_t = \text{Mean}(H_t),
\end{equation}
where $H_v \in \mathbb{R}^{n_v \times d_v}$, $H_t \in \mathbb{R}^{n_t \times d_t}$, $h_v \in \mathbb{R}^{d_v}$, $h_t \in \mathbb{R}^{d_t}$. $n_v$ and $n_t$ denote the number of patches and tokens; $d_v$ and $d_t$ are dimensions of patch embedding and token embedding, respectively. 

\begin{figure}[t]
    \centering
    \begin{subfigure}{0.25\textwidth}
        \includegraphics[width=\linewidth]{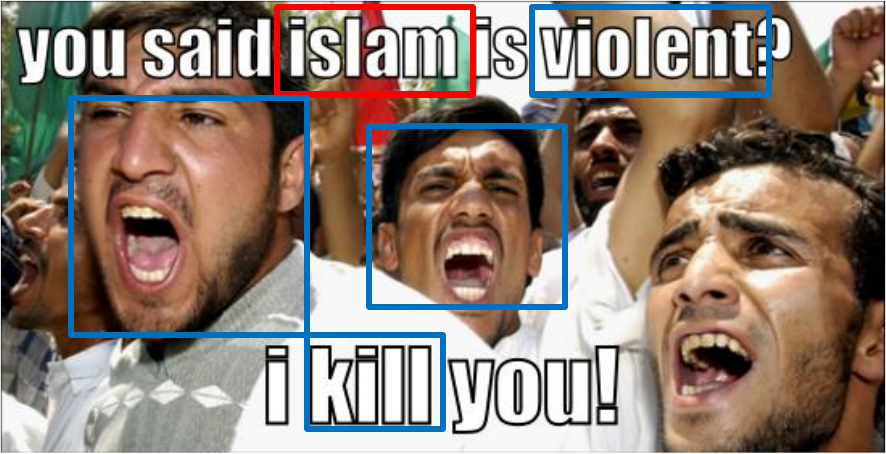}
        \caption{Consistent expression}
        \label{fig:example7}
    \end{subfigure}
    \hfill
    \begin{subfigure}{0.22\textwidth}
        \includegraphics[width=\linewidth]{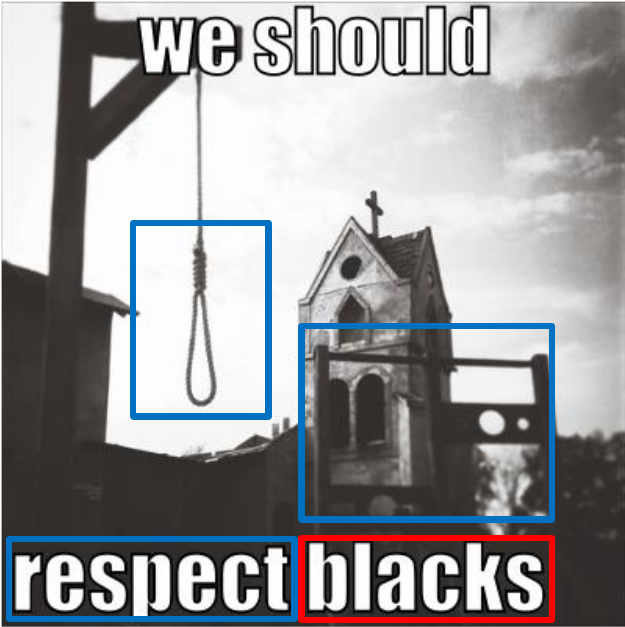}
        \caption{Inconsistent expression}
        \label{fig:example8}
    \end{subfigure} 
    \caption{\small Examples of two context types of hateful memes. In Figure~\ref{fig:example7}, Islam is framed as the bad group. The words ``violent'' and ``kill'' are reinforced by the image, which shows a defiant and confrontational crowd, thereby strengthening the negative portrayal through visual alignment. In contrast, Figure~\ref{fig:example8} uses the positive term ``respect'' to describe an execution scene with nooses and guillotines. This contrast between the text and image creates ironic humor.} 
    \vspace{-5mm}
\end{figure}

To obtain the meme's context embedding, we first apply separate context encoders for the image and text to capture intra-modal contextual information. The context embeddings from both modalities are then fused to generate the final context representation of the meme. For two context encoders, we use two separate linear layers to model intra-modal interactions for each modality. The fusion strategy is guided by the contextual characteristics of hateful memes. Based on how hate is expressed, we categorize meme contexts into two types: (1) Consistent expression, the image and text convey aligned emotions, often delivering hate more explicitly (e.g., Figure~\ref{fig:example7}, where a defiant image amplifies the tone of the text); (2) Inconsistent expression, the image and text convey contrasting emotions, often expressing hate more implicitly (e.g., Figure~\ref{fig:example8}, where the text says ``respect'' but the image depicts a dark execution scene, forming a stark emotional contrast). Therefore, we adopt the Hadamard product of linear transforms of image and text embeddings for fusion:
\begin{equation}
    h_{PC} = (W_1 h_v + b_1) \odot (W_2h_t + b_2),
\end{equation}
where $W_1 \in \mathbb{R}^{d \times d_v}$, $W_2 \in \mathbb{R}^{d \times d_t}$, $b_1$, $b_2 \in \mathbb{R}^{d}$, and $h_{PC} \in \mathbb{R}^{d}$. This approach offers two key advantages: (1) it aligns information across corresponding dimensions of the two modalities, and (2) it enables both intra-modal feature interaction and cross-modal information integration. The Hadamard product is particularly effective in capturing the two context types discussed above. When the modalities express consistent semantics, their aligned dimensions are amplified via multiplication. When they diverge, the resulting negative values highlight the inconsistency. Both effects are preserved and emphasized through this fusion method.

\subsection{False Claim Module.}
In addition to presupposed context, another common feature of hateful memes is the presence of false claims. These can be identified from two perspectives:  (1) Whether the meme deliberately promotes negative stereotypes based on social knowledge, and (2) whether the reference between image and text is accurately aligned. To address these aspects, we successively propose  the Social Perception Module and the Cross-modal Reference Module.

\subsubsection{Social Perception Module (SPM).}
\label{sec:SPM}
To enable the model to detect factual distortions or bias amplification in memes, incorporating external knowledge is crucial. LLMs trained on diverse social media data possess a certain level of social awareness that can help address this challenge. However, leveraging LLMs introduces two key issues: (1) LLMs inherently exhibit cognitive bias and often under-identify hateful content, which may impair classification performance if used directly~\cite{lin2024towards}. (2) Critical knowledge is often retained in the latent space and is difficult to decode effectively~\cite{zhang2024visually}. 

To overcome these challenges, we first design a prompt template $\mathcal{P}$ as input to the LLM: ``<$H_v$> <$\mathcal{T}$> Are there false claims?'' where angle brackets indicate placeholders. Specifically, <$H_v$> denotes the placeholder for the patch embeddings. We encode ``<$\mathcal{T}$> Are there false claims'' using the LLM's embedding layer and concatenate it with $H_v$ before feeding the combined representation into the LLM. To mitigate the inherent bias of the LLM toward regarding memes as non-hateful, we fine-tune it to adapt it to the hateful meme detection task. To fully exploit the knowledge encoded in the LLM, we extract the last hidden state $h_{SP}$ for downstream classification:
\begin{equation}
    h_{SP}, A = LLM_{\mathrm{fine-tuned}}(\mathcal{P})
\end{equation}
where $h_{SP} \in \mathbb{R}^{d_{SP}}$; $A \in \mathbb{R}^{n \times n}$ is the attention matrix with $n$ tokens in total. $LLM_{\mathrm{fine-tuned}}$ denotes the LLM fine-tuned during training, without restricting the specific fine-tuning method. In our implementation, we adopt LoRA~\cite{hu2021lora} for fine-tuning.

\subsubsection{Cross-modal Reference Module (CRM).}
\label{sec:CRM}
To identify incorrect referential relations between image and text in memes, such as the example in Figure~\ref{fig:example1}, where the word ``capturing'' incorrectly refers to a friendly handshake highlighted in the blue boxes. We propose the \textit{cross-modal reference graph} to explicitly model the referential links between image patches and text tokens, thereby supporting the hateful meme detection task.

\noindent\textbf{Cross-modal Reference Graph.} The cross-modal reference graph consists of two types of nodes: text tokens $v_{t}$ and image patches $v_{p}$, with three types of undirected edges capturing their relationships: (1) Token-Token edges $E_{tt}$ follow the natural token order in the text. (2) Patch-Patch edges $E_{pp}$ connect adjacent image patches to preserve spatial structure. (3) Token-Patch edges $E_{tp}$ represent interactions between text tokens and image patches.

\noindent\textbf{Constructing token-patch edges $E_{tp}$.} In SPM, both image and text are included in the prompt $\mathcal{P}$, and the resulting output $h_{SP}$ is used for hateful meme detection. During fine-tuning, the LLM learns the associations between image and text, which are reflected in its attention scores. We therefore leverage the attention matrix $A \in \mathbb{R}^{n \times n}$ to construct referential relationships between two modalities, where $n$ denotes the number of tokens. The index range for text tokens in $\mathcal{T}$ is $[i_1, i_2]$, and for image patch tokens is $[j_1, j_2]$. For each text token $i$, its patch token neighbors are defined as:
\begin{equation}
    S_i = \mathrm{Top-K~Indices}(A_{i,j_1:j_2}, K).
    \label{eq:topk}
\end{equation}
Where $S_i \in [j_1, j_2]$ denotes the indices of the top-K image patch token with the highest attention scores for token $i$. The hyperparameter $K$ controls the number of token-patch connections per text token. We then construct the edge set $E_{tp}$ as:
\begin{equation}
    E_{tp} = \bigcup_{i=i_1}^{i_2} \left\{(i, j) \;\middle|\; j \in S_i \right\}.
\end{equation}
$E_{tp}$ connects each text token $i$ to its most relevant patch tokens $j \in S_i$. The initial features of $v_t$ and $v_p$ are $H_t$ and $H_v$ computed from Eq~\ref{eq:encoders}.

\noindent\textbf{Harnessing Cross-modal Reference Graph.} The constructed cross-modal reference graph $G=(V=\{v_t, v_p\}, E=\{E_{tt}, E_{pp}, E_{tp}\})$ is then fed into a Graph Neural Network (GNN), which gives node embeddings. These node embeddings are aggregated via mean pooling to produce the graph embedding $h_{CR}$:
\begin{equation}
    h_{CR} = \mathrm{\text{Mean}}(\mathrm{GNN}(G, H_v, H_t)),
\end{equation}
where $h_{CR} \in \mathbb{R}^{d_{CR}}$, and $d_{CR}$ denotes the node embedding dimension. A theoretical analysis about reference graph is provided in Appendix~\ref{app:analysis_graph}.

\subsection{Classifier.}
To apply the above three modules to the hateful meme classification task, we combine their embeddings by directly concatenating them to form the final representation $h$:
\begin{equation}
    h = [h_{PC}, h_{SP}, h_{CR}],
    \label{eq:h}
\end{equation}
where $h \in \mathbb{R}^{d+d_{SP}+d_{CR}}$. Then, $h$ is fed into a binary classifier to predict whether the meme is hateful.

\section{Experiments.}
In this section, we detail the experiments conducted to evaluate the performance and capabilities of  \textsf{SHIELD}. We aim to address the following research questions:
\begin{description}
    \item[\textbf{RQ1:}] How does \textsf{SHIELD} perform on hateful meme classification compared with existing models?
    
    \item[\textbf{RQ2:}] How effective are the two proposed modules?
    \item[\textbf{RQ3:}] How does \textsf{SHIELD} perform on other multimodal social media tasks such as fake news detection? 
\end{description} 
Besides, we further evaluate model specificity and generalization across different attack targets in Appendix~\ref{app:sepcificity} and analyze the sensitivity of parameter $K$ in Eq.~\ref{eq:topk} in Appendix~\ref{sec:parameter}.

\begin{table}[H]
    \centering
    \caption{Summary of hateful meme datasets.}
    \label{tab:data1}
    \small
    \begin{tabular}{lcccc}
         \hline
          & & FHM & Harm-C & Harm-P \\ \hline
          \multirow{2}{*}{Train} & \#Harmful  
          & 3019 & 1064   & 621    \\
          & \#Benign & 5481 & 1949   & 616    \\ \hline
          \multirow{2}{*}{Validation} & \#Harmful  & 247  & 61     & 79     \\
                       & \#Benign & 253  & 116    & 80     \\ \hline
\multirow{2}{*}{Test}  & \#Harmful  & 1380 & 124    & 156    \\
                       & \#Benign & 1760 & 230    & 163  \\ \hline
    \end{tabular} 
    \vspace{-3mm}
    
\end{table}

\begin{table*}[t]

\caption{Hateful meme classification results. Best results are in bold; runner-up results are underlined.}
\label{tab:exp1}
\centering
\footnotesize

\begin{tabular}{lccccccccc}
\hline
Dataset                               & \multicolumn{3}{c}{FHM}                                                              & \multicolumn{3}{c}{Harm-C}                                                           & \multicolumn{3}{c}{Harm-P}                                                     \\ \hline
\multicolumn{1}{l|}{Model}           & AUC                 & Acc.            & \multicolumn{1}{c|}{F1}            & AUC                 & Acc.            & \multicolumn{1}{c|}{F1}            & AUC                                & Acc.           & F1            \\ \hline \hline
\multicolumn{1}{l|}{CLIP Text-Only}      & 63.01  & 63.17          &  \multicolumn{1}{c|}{56.45}          & 86.70        & 78.99         & \multicolumn{1}{c|}{76.96}          & 67.87                       & 61.32         & 60.70        \\
\multicolumn{1}{l|}{CLIP Image-Only}      & 79.57  & 75.10          & \multicolumn{1}{c|}{70.75}          & 92.36          & 84.19          & \multicolumn{1}{c|}{82.87}          & 68.53                         & 62.14           & 61.68          \\ \hdashline
\multicolumn{1}{l|}{PromptHate}      & \underline{81.76} & 75.34          & \multicolumn{1}{c|}{73.66}          & 87.50          & 81.30          & \multicolumn{1}{c|}{79.76}          & 52.67                         & 49.35          & 33.04          \\
\multicolumn{1}{l|}{HateCLIPper}     & 80.82          & 76.46          & \multicolumn{1}{c|}{72.75}          & 91.61          & 83.93 & \multicolumn{1}{c|}{82.67} & 70.62                         & 63.89          & 63.72          \\
\multicolumn{1}{l|}{ISSUES}          & 81.66          & \underline{77.33} & \multicolumn{1}{c|}{\underline{74.06}} & \underline{92.39} & 81.98          & \multicolumn{1}{c|}{80.85}          & \underline{71.06}                         & \underline{64.08}          & \underline{63.79}          \\
\multicolumn{1}{l|}{ExplainHM}       & 79.03          & 72.70          & \multicolumn{1}{c|}{70.37}          & 89.31          & 82.73          & \multicolumn{1}{c|}{81.82}          & 62.97                         & 58.97          & 58.62          \\
\multicolumn{1}{l|}{Zero-Shot} & 50.05           & 39.62           & \multicolumn{1}{c|}{28.72}           & 50.44           & 35.60            & \multicolumn{1}{c|}{26.92}           & 50.00                            & 48.91           & 32.85           \\
\multicolumn{1}{l|}{Fine-tuned MLP}  & 70.13          & 63.63          & \multicolumn{1}{c|}{62.98}          & 91.43          &\underline{86.39}          & \multicolumn{1}{c|}{\underline{85.45}}          & 63.63 & 57.50          & 57.42          \\ \hdashline
\multicolumn{1}{l|}{\textsf{SHIELD}}            & \textbf{87.51} & \textbf{80.12} & \multicolumn{1}{c|}{\textbf{79.14}} & \textbf{93.81} & \textbf{89.36} &\multicolumn{1}{c|}{\textbf{88.74}} & \textbf{72.58}                & \textbf{67.60} & \textbf{67.42} 
\\ \hline 
\end{tabular}
\vspace{-3mm}
\end{table*}

\begin{table*}[t]
\caption{Ablation studies by using different modules.} 
\vspace{-2mm}
\label{tab:exp2}
\centering
\footnotesize
\begin{tabular}{lccc|ccc|ccc}
\hline
Dataset                               & \multicolumn{3}{c}{FHM}                                                              & \multicolumn{3}{c}{Harm-C}                                                           & \multicolumn{3}{c}{Harm-P}                                                     \\ \hline 
\multicolumn{1}{l|}{Model}           & AUC                 & Accuracy            & Macro-F1            & AUC                 & Accuracy            & Macro-F1            & AUC                                & Accuracy            & Macro-F1            \\ \hline \hline
\multicolumn{1}{l|}{SPM} & 86.68          & 79.25          & 78.25          & {93.32} & \underline{88.78} & \underline{88.07} & 60.61          & 56.75          & 56.72           \\
\multicolumn{1}{l|}{SPM+PCM}  & \underline{87.26} & \underline{79.41} & \underline{78.33} & \underline{93.74}          & 88.33          & 87.63          & \underline{71.74} & \underline{66.97} & \underline{66.85}          \\ 
\multicolumn{1}{l|}{\textsf{SHIELD}}            & \textbf{87.51} & \textbf{80.12} & \textbf{79.14} & \textbf{93.81} & \textbf{89.36} & \textbf{88.74} & \textbf{72.58} & \textbf{67.60} & \textbf{67.42} 
\\ \hline 

\hline 
\end{tabular} 
\vspace{-5mm}
\end{table*}



\subsection{Experimental Setup.}
\subsubsection{Datasets.}\label{sec:datasets}We evaluate models on three publicly available meme datasets: (1) FHM~\cite{kiela2020hateful}, released by Facebook as part of a multimodal hateful meme classification challenge. It is worth noting that many existing studies test models using different splits of the data (e.g., omitting the validation set~\cite{cao2023prompting,lin2024towards}, or using different test sets~\cite{kumar2022hate,burbi2023mapping}). As labels for the entire dataset are available we use the full set for fair comparison and consistent comparison: the \textit{train} set used for training, the \textit{dev\_seen} set used for validation, and the remaining sets (\textit{dev\_unseen, test\_seen} and \textit{test\_unseen}) used for testing. (2) Harm-C~\cite{pramanick2021detecting}, consisting of COVID-19-related memes. (3) Harm-P~\cite{pramanick2021momenta}, containing US politics-related memes. Both were collected from various web resources, including Reddit, Facebook, and Instagram, but mainly Google Images. The training, validation, and testing sets follow the splits defined in the original paper. However, original Harm-P contains a large number of duplicate memes, which appear in the training, validation, and testing sets, has sometimes been overlooked, leading to inaccurate model evaluation. We remove duplicates by prioritizing samples in the order of train $>$ valid $>$ test, since the latter two contain fewer samples. The cleaned data statistics are summarized in Table~\ref{tab:data1}.

\subsubsection{Baselines.}We compare with several state-of-the-art unimodal and multimodal hateful meme detection models, including CLIP, PromptHate~\cite{cao2023prompting}, HateCLIPper~\cite{kumar2022hate}, ISSUES~\cite{burbi2023mapping}, ExplainHM~\cite{lin2024towards}, and both zero-shot and fine-tuned MLP versions of Llama~\cite{touvron2023llama}. Detailed descriptions are provided in Appendix~\ref{sec:app_baselines}.

\subsubsection{Evaluation.}
We adopt the most commonly used evaluation metrics for the hateful meme classification task~\cite{kiela2020hateful,cao2023prompting,kumar2022hate,burbi2023mapping,lin2024towards}: Area Under the Receiver Operating Characteristic Curve (AUC), Accuracy, and Macro-F1 score. To obtain more reliable results, all models were trained and evaluated multiple times, and we report the mean performance across five runs. In all tables, the best and runner-up results are in bold and underlined, respectively. Details about experimental settings as well as alignment across modalities can be seen in Appendix~\ref{sec:app_implementation}.


\subsection{RQ1: Hateful Meme Classification.}
\label{sec:rq1}
Table~\ref{tab:exp1} compares \textsf{SHIELD} with baselines on FHM, Harm-C, and Harm-P. First, multimodal models consistently outperform unimodal ones, showing that visual information is critical for hateful meme detection. CLIP Image-Only also surpasses CLIP Text-Only by a large margin, likely because meme text is embedded in the image and CLIP has implicit OCR ability~\cite{yu2023turning}. Second, among the baselines, ISSUES and HateCLIPper achieve the strongest results, highlighting the importance of stronger multimodal interaction and external knowledge. PromptHate and ExplainHM perform worse in our setting because we use a validation set for checkpoint selection instead of selecting directly on the test set. Third, Zero-Shot performs poorly, while Fine-tuned MLP yields limited improvement, showing that pretrained LLMs alone are insufficient and require fine-tuning for this task~\cite{lin2024towards,ji2023survey}. Finally, \textsf{SHIELD} achieves the best AUC, Accuracy, and F1 on all three datasets, surpassing the best baseline in AUC by 5.75\%, 1.42\%, and 1.52\% on FHM, Harm-C, and Harm-P, respectively.


\subsection{RQ2: Module Performance Analysis.} We conduct ablation experiments to evaluate module effectiveness: 
\begin{enumerate}
    \item \textbf{SPM}: Using only the SPM, that is, using only $h_{SP}$ in Eq.~\ref{eq:h}.
    \item \textbf{SPM+PCM}: Using both SPM and PCM, that is, incorporating  $h_{SP}$ and $h_{PC}$ in Eq.~\ref{eq:h}.
    \item \textbf{\textsf{SHIELD}}: Using all modules by feeding the complete representation $h$ into the classifier.
\end{enumerate}
As shown in Table~\ref{tab:exp2}, the full \textsf{SHIELD} model consistently outperforms all variants. Performance improves on most metrics as each module is added, with especially clear gains on FHM and Harm-P. The improvements on Harm-C are smaller, likely because harmfulness is defined more broadly than hatefulness. For example, in Figure~\ref{fig:harm_c_example}, the meme has no explicit attack target and may be labeled harmful simply because it refers to the dangerous act of drinking a bottle of disinfectant, implying self-harm. In such cases, the harmful behavior is already conveyed clearly by the text, so additional multimodal extraction yields limited benefit. In contrast, Harm-P mainly targets political figures such as Trump and Obama, making it more similar to hateful content. 

\begin{figure}[!th]
    \centering
    \includegraphics[width=\linewidth]{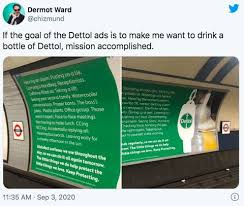}
    \caption{An example of a harmful meme in Harm-C.}
    \label{fig:harm_c_example}
\end{figure}

\begin{table}[h]
\centering
\caption{Performance on the Fake News Classification.}
\label{tab:exp5}
\footnotesize
\begin{tabular}{clcc}
\hline
\centering Dataset &Model &Acc.  & F1\\
\midrule
\multirow{4}{*}{ReCOVery}  
& SAFE     & \underline{93.95}          & \underline{85.77}                 \\
                           & MCAN     & 85.72          & 56.99              \\
                           & CAFE     & 88.70          & 67.68          \\ 
                           & \textsf{SHIELD}     & \textbf{97.61} & \textbf{95.00}  \\
                           \midrule
\multirow{4}{*}{GossipCop} 
& SAFE     & \underline{87.67}          & \underline{78.68}               \\
                           & MCAN     & 81.49          & 51.12           \\
                           & CAFE     & 82.62          & 66.13             \\ 
                           & \textsf{SHIELD}     & \textbf{89.00} & \textbf{81.60}  \\ \bottomrule 

\end{tabular}
\vspace{-5mm}
\end{table}

\subsection{RQ3: Versatility.}
\label{sec:rq5}
Presupposed context and false claims are not exclusive to hateful memes, they also appear in other social media data, such as fake news. If the \textsf{SHIELD} framework indeed captures these features, it should generalize to broader social media tasks. To verify this, we applied \textsf{SHIELD} to the multimodal fake news classification task. For baseline models, we selected several of the most recently released and widely adopted open-source models, including SAFE~\cite{zhou2020similarity}, MCAN~\cite{wu2021multimodal}, and CAFE~\cite{chen2022cross}. For datasets, we used ReCOVery~\cite{zhou2020recovery} and GossipCop~\cite{shu2020fakenewsnet}. More details about results  and implementation can be seen in Appendix~\ref{sec:fake_news}. The results suggest two main insights. (1) Among the baselines, although MCAN and CAFE were proposed after SAFE, neither outperforms it. This may be due to the lack of a validation set in their original training pipelines, which likely caused overfitting and weaker generalization. In contrast, SAFE shows stronger performance across both datasets. (2) Despite text truncation, \textsf{SHIELD} outperforms the baselines on most metrics, demonstrating its applicability to broader multimodal social media tasks and highlighting the prevalence of presupposed context and false claims in social media.


\section{Conclusion.}

We unpack hateful memes by examining how hate is expressed, drawing on insights from philosophy and psychology to identify two key features: a presupposed context and false claims. Based on these two features, we propose two modules, PCM and FACT, which together form the \textsf{SHIELD} framework. PCM captures the meme's context by modeling intra-modal interactions and fusing contextual information across modalities. FACT enhances the model's understanding of hateful content by introducing external knowledge through SPM, and explicitly constructs image-text referential relations via CRM to detect false claims. Experimental comparisons with multiple baselines demonstrate the effectiveness, generalizability, and versatility of \textsf{SHIELD}.

\bibliographystyle{siamplain}
\bibliography{Content/ref}

@article{pandiani2024toxic,
  title={Toxic Memes: A Survey of Computational Perspectives on the Detection and Explanation of Meme Toxicities},
  author={Pandiani, Delfina Sol Martinez and Sang, Erik Tjong Kim and Ceolin, Davide},
  journal={arXiv preprint arXiv:2406.07353},
  year={2024}
}

@article{mokady2021clipcap,
  title={Clipcap: Clip prefix for image captioning},
  author={Mokady, Ron and Hertz, Amir and Bermano, Amit H},
  journal={arXiv preprint arXiv:2111.09734},
  year={2021}
}

@article{touvron2023llama,
  title={Llama: Open and efficient foundation language models},
  author={Touvron, Hugo and Lavril, Thibaut and Izacard, Gautier and Martinet, Xavier and Lachaux, Marie-Anne and Lacroix, Timoth{\'e}e and Rozi{\`e}re, Baptiste and Goyal, Naman and Hambro, Eric and Azhar, Faisal and others},
  journal={arXiv preprint arXiv:2302.13971},
  year={2023}
}

@article{liu2024visual,
  title={Visual instruction tuning},
  author={Liu, Haotian and Li, Chunyuan and Wu, Qingyang and Lee, Yong Jae},
  journal={Advances in neural information processing systems},
  volume={36},
  year={2024}
}

@inproceedings{cao2023pro,
  title={Pro-cap: Leveraging a frozen vision-language model for hateful meme detection},
  author={Cao, Rui and Hee, Ming Shan and Kuek, Adriel and Chong, Wen-Haw and Lee, Roy Ka-Wei and Jiang, Jing},
  booktitle={Proceedings of the 31st ACM International Conference on Multimedia},
  pages={5244--5252},
  year={2023}
}

@inproceedings{li2023blip,
  title={Blip-2: Bootstrapping language-image pre-training with frozen image encoders and large language models},
  author={Li, Junnan and Li, Dongxu and Savarese, Silvio and Hoi, Steven},
  booktitle={International conference on machine learning},
  pages={19730--19742},
  year={2023},
  organization={PMLR}
}

@inproceedings{lin2024towards,
  title={Towards explainable harmful meme detection through multimodal debate between large language models},
  author={Lin, Hongzhan and Luo, Ziyang and Gao, Wei and Ma, Jing and Wang, Bo and Yang, Ruichao},
  booktitle={Proceedings of the ACM on Web Conference 2024},
  pages={2359--2370},
  year={2024}
}

@article{pramanick2021momenta,
  title={MOMENTA: A multimodal framework for detecting harmful memes and their targets},
  author={Pramanick, Shraman and Sharma, Shivam and Dimitrov, Dimitar and Akhtar, Md Shad and Nakov, Preslav and Chakraborty, Tanmoy},
  journal={arXiv preprint arXiv:2109.05184},
  year={2021}
}

@article{pramanick2021detecting,
  title={Detecting harmful memes and their targets},
  author={Pramanick, Shraman and Dimitrov, Dimitar and Mukherjee, Rituparna and Sharma, Shivam and Akhtar, Md Shad and Nakov, Preslav and Chakraborty, Tanmoy},
  journal={arXiv preprint arXiv:2110.00413},
  year={2021}
}

@inproceedings{cao2024modularized,
  title={Modularized Networks for Few-shot Hateful Meme Detection},
  author={Cao, Rui and Lee, Roy Ka-Wei and Jiang, Jing},
  booktitle={Proceedings of the ACM on Web Conference 2024},
  year={2024}
}

@article{hu2021lora,
  title={Lora: Low-rank adaptation of large language models},
  author={Hu, Edward J and Shen, Yelong and Wallis, Phillip and Allen-Zhu, Zeyuan and Li, Yuanzhi and Wang, Shean and Wang, Lu and Chen, Weizhu},
  journal={arXiv preprint arXiv:2106.09685},
  year={2021}
}

@inproceedings{mei2024improving,
  title={Improving Hateful Meme Detection through Retrieval-Guided Contrastive Learning},
  author={Mei, Jingbiao and Chen, Jinghong and Lin, Weizhe and Byrne, Bill and Tomalin, Marcus},
  booktitle={Proceedings of the 62nd Annual Meeting of the Association for Computational Linguistics (Volume 1: Long Papers)},
  pages={5333--5347},
  year={2024}
}

@article{kumar2022hate,
  title={Hate-clipper: Multimodal hateful meme classification based on cross-modal interaction of clip features},
  author={Kumar, Gokul Karthik and Nandakumar, Karthik},
  journal={arXiv preprint arXiv:2210.05916},
  year={2022}
}

@inproceedings{burbi2023mapping,
  title={Mapping memes to words for multimodal hateful meme classification},
  author={Burbi, Giovanni and Baldrati, Alberto and Agnolucci, Lorenzo and Bertini, Marco and Del Bimbo, Alberto},
  booktitle={Proceedings of the IEEE/CVF International Conference on Computer Vision},
  pages={2832--2836},
  year={2023}
}

@inproceedings{baldrati2023zero,
  title={Zero-shot composed image retrieval with textual inversion},
  author={Baldrati, Alberto and Agnolucci, Lorenzo and Bertini, Marco and Del Bimbo, Alberto},
  booktitle={Proceedings of the IEEE/CVF International Conference on Computer Vision},
  pages={15338--15347},
  year={2023}
}

@article{kiela2020hateful,
  title={The hateful memes challenge: Detecting hate speech in multimodal memes},
  author={Kiela, Douwe and Firooz, Hamed and Mohan, Aravind and Goswami, Vedanuj and Singh, Amanpreet and Ringshia, Pratik and Testuggine, Davide},
  journal={Advances in neural information processing systems},
  volume={33},
  pages={2611--2624},
  year={2020}
}

@article{devlin2018bert,
  title={Bert: Pre-training of deep bidirectional transformers for language understanding},
  author={Devlin, Jacob},
  journal={arXiv preprint arXiv:1810.04805},
  year={2018}
}

@article{ren2016faster,
  title={Faster R-CNN: Towards real-time object detection with region proposal networks},
  author={Ren, Shaoqing and He, Kaiming and Girshick, Ross and Sun, Jian},
  journal={IEEE transactions on pattern analysis and machine intelligence},
  volume={39},
  number={6},
  pages={1137--1149},
  year={2016},
  publisher={IEEE}
}

@article{li2019visualbert,
  title={Visualbert: A simple and performant baseline for vision and language},
  author={Li, Liunian Harold and Yatskar, Mark and Yin, Da and Hsieh, Cho-Jui and Chang, Kai-Wei},
  journal={arXiv preprint arXiv:1908.03557},
  year={2019}
}

@article{zhang2020hateful,
  title={Hateful memes detection via complementary visual and linguistic networks},
  author={Zhang, Weibo and Liu, Guihua and Li, Zhuohua and Zhu, Fuqing},
  journal={arXiv preprint arXiv:2012.04977},
  year={2020}
}

@inproceedings{suryawanshi2020multimodal,
  title={Multimodal meme dataset (MultiOFF) for identifying offensive content in image and text},
  author={Suryawanshi, Shardul and Chakravarthi, Bharathi Raja and Arcan, Mihael and Buitelaar, Paul},
  booktitle={Proceedings of the second workshop on trolling, aggression and cyberbullying},
  pages={32--41},
  year={2020}
}

@article{lippe2020multimodal,
  title={A multimodal framework for the detection of hateful memes},
  author={Lippe, Phillip and Holla, Nithin and Chandra, Shantanu and Rajamanickam, Santhosh and Antoniou, Georgios and Shutova, Ekaterina and Yannakoudakis, Helen},
  journal={arXiv preprint arXiv:2012.12871},
  year={2020}
}

@inproceedings{zhou2021multimodal,
  title={Multimodal learning for hateful memes detection},
  author={Zhou, Yi and Chen, Zhenhao and Yang, Huiyuan},
  booktitle={2021 IEEE International conference on multimedia \& expo workshops (ICMEW)},
  pages={1--6},
  year={2021},
  organization={IEEE}
}

@article{velioglu2020detecting,
  title={Detecting hate speech in memes using multimodal deep learning approaches: Prize-winning solution to hateful memes challenge},
  author={Velioglu, Riza and Rose, Jewgeni},
  journal={arXiv preprint arXiv:2012.12975},
  year={2020}
}

@inproceedings{ji2023identifying,
  title={Identifying creative harmful memes via prompt based approach},
  author={Ji, Junhui and Ren, Wei and Naseem, Usman},
  booktitle={Proceedings of the ACM Web Conference 2023},
  pages={3868--3872},
  year={2023}
}

@article{liu2019roberta,
  title={Roberta: A robustly optimized bert pretraining approach},
  author={Liu, Yinhan},
  journal={arXiv preprint arXiv:1907.11692},
  year={2019}
}

@article{zhang2024visually,
  title={Why are Visually-Grounded Language Models Bad at Image Classification?},
  author={Zhang, Yuhui and Unell, Alyssa and Wang, Xiaohan and Ghosh, Dhruba and Su, Yuchang and Schmidt, Ludwig and Yeung-Levy, Serena},
  journal={arXiv preprint arXiv:2405.18415},
  year={2024}
}

@misc{liu2024llava,
  title={Llava-next: Improved reasoning, ocr, and world knowledge},
  author={Liu, Haotian and Li, Chunyuan and Li, Yuheng and Li, Bo and Zhang, Yuanhan and Shen, Sheng and Lee, Yong Jae},
  year={2024}
}

@article{bai2023qwen,
  title={Qwen-vl: A frontier large vision-language model with versatile abilities},
  author={Bai, Jinze and Bai, Shuai and Yang, Shusheng and Wang, Shijie and Tan, Sinan and Wang, Peng and Lin, Junyang and Zhou, Chang and Zhou, Jingren},
  journal={arXiv preprint arXiv:2308.12966},
  year={2023}
}

@article{zhu2020enhance,
  title={Enhance multimodal transformer with external label and in-domain pretrain: Hateful meme challenge winning solution},
  author={Zhu, Ron},
  journal={arXiv preprint arXiv:2012.08290},
  year={2020}
}

@article{muennighoff2020vilio,
  title={Vilio: State-of-the-art visio-linguistic models applied to hateful memes},
  author={Muennighoff, Niklas},
  journal={arXiv preprint arXiv:2012.07788},
  year={2020}
}

@article{ji2023survey,
  title={Survey of hallucination in natural language generation},
  author={Ji, Ziwei and Lee, Nayeon and Frieske, Rita and Yu, Tiezheng and Su, Dan and Xu, Yan and Ishii, Etsuko and Bang, Ye Jin and Madotto, Andrea and Fung, Pascale},
  journal={ACM Computing Surveys},
  volume={55},
  number={12},
  pages={1--38},
  year={2023},
  publisher={ACM New York, NY}
}

@inproceedings{yu2023turning,
  title={Turning a clip model into a scene text detector},
  author={Yu, Wenwen and Liu, Yuliang and Hua, Wei and Jiang, Deqiang and Ren, Bo and Bai, Xiang},
  booktitle={Proceedings of the IEEE/CVF Conference on Computer Vision and Pattern Recognition},
  pages={6978--6988},
  year={2023}
}

@inproceedings{chen2022cross,
  title={Cross-modal ambiguity learning for multimodal fake news detection},
  author={Chen, Yixuan and Li, Dongsheng and Zhang, Peng and Sui, Jie and Lv, Qin and Tun, Lu and Shang, Li},
  booktitle={Proceedings of the ACM web conference 2022},
  pages={2897--2905},
  year={2022}
}

@inproceedings{zhou2023multimodal,
  title={Multimodal fake news detection via clip-guided learning},
  author={Zhou, Yangming and Yang, Yuzhou and Ying, Qichao and Qian, Zhenxing and Zhang, Xinpeng},
  booktitle={2023 IEEE International Conference on Multimedia and Expo (ICME)},
  pages={2825--2830},
  year={2023},
  organization={IEEE}
}

@article{zhou2023linguistic,
  title={Linguistic-style-aware neural networks for fake news detection},
  author={Zhou, Xinyi and Li, Jiayu and Li, Qinzhou and Zafarani, Reza},
  journal={arXiv preprint arXiv:2301.02792},
  year={2023}
}

@article{shu2020fakenewsnet,
  title={Fakenewsnet: A data repository with news content, social context, and spatiotemporal information for studying fake news on social media},
  author={Shu, Kai and Mahudeswaran, Deepak and Wang, Suhang and Lee, Dongwon and Liu, Huan},
  journal={Big data},
  volume={8},
  number={3},
  pages={171--188},
  year={2020},
  publisher={Mary Ann Liebert, Inc., publishers 140 Huguenot Street, 3rd Floor New~…}
}

@inproceedings{zhou2020recovery,
  title={Recovery: A multimodal repository for covid-19 news credibility research},
  author={Zhou, Xinyi and Mulay, Apurva and Ferrara, Emilio and Zafarani, Reza},
  booktitle={Proceedings of the 29th ACM international conference on information \& knowledge management},
  pages={3205--3212},
  year={2020}
}

@inproceedings{wu2021multimodal,
  title={Multimodal fusion with co-attention networks for fake news detection},
  author={Wu, Yang and Zhan, Pengwei and Zhang, Yunjian and Wang, Liming and Xu, Zhen},
  booktitle={Findings of the association for computational linguistics: ACL-IJCNLP 2021},
  pages={2560--2569},
  year={2021}
}

@inproceedings{zhou2020similarity,
  title={: Similarity-aware multi-modal fake news detection},
  author={Zhou, Xinyi and Wu, Jindi and Zafarani, Reza},
  booktitle={Pacific-Asia Conference on knowledge discovery and data mining},
  pages={354--367},
  year={2020},
  organization={Springer}
}

@article{behnamghader2024llm2vec,
  title={Llm2vec: Large language models are secretly powerful text encoders},
  author={BehnamGhader, Parishad and Adlakha, Vaibhav and Mosbach, Marius and Bahdanau, Dzmitry and Chapados, Nicolas and Reddy, Siva},
  journal={arXiv preprint arXiv:2404.05961},
  year={2024}
}

@article{gal2022image,
  title={An image is worth one word: Personalizing text-to-image generation using textual inversion},
  author={Gal, Rinon and Alaluf, Yuval and Atzmon, Yuval and Patashnik, Or and Bermano, Amit H and Chechik, Gal and Cohen-Or, Daniel},
  journal={arXiv preprint arXiv:2208.01618},
  year={2022}
}

@article{kipf2016semi,
  title={Semi-supervised classification with graph convolutional networks},
  author={Kipf, Thomas N and Welling, Max},
  journal={arXiv preprint arXiv:1609.02907},
  year={2016}
}

@article{cao2023prompting,
  title={Prompting for multimodal hateful meme classification},
  author={Cao, Rui and Lee, Roy Ka-Wei and Chong, Wen-Haw and Jiang, Jing},
  journal={arXiv preprint arXiv:2302.04156},
  year={2023}
}

@misc{uk2023law,
    title={UK’s new online safety law adds to crackdown on Big Tech companies},
    url={https://apnews.com/article/online-safety-bill-uk-tech-regulation-4371bbb0d7442eed0f44bf7839443268}
}

@misc{france2020law,
    title={France online hate speech law to force social media sites to act quickly},
    url={https://www.theguardian.com/world/2019/jul/09/france-online-hate-speech-law-social-media}
}

@misc{germany2017law,
    title={Network Enforcement Act},
    url={https://en.wikipedia.org/wiki/Network_Enforcement_Act}
}

@misc{aus20law,
    title={Internet giants could be fined up to 12 million under Austrian hate speech law},
    url={https://www.reuters.com/article/world/internet-giants-could-be-fined-up-to-12-million-under-austrian-hate-speech-law-idUSKBN25U1R8/}
}

@misc{eu25,
    title={How the EU Digital Services Act (DSA) Affects Online Free Speech in 2025},
    url={https://adfinternational.org/commentary/eu-digital-services-act-one-year}
}

@inproceedings{saha2019prevalence,
  title={Prevalence and psychological effects of hateful speech in online college communities},
  author={Saha, Koustuv and Chandrasekharan, Eshwar and De Choudhury, Munmun},
  booktitle={Proceedings of the 10th ACM conference on web science},
  pages={255--264},
  year={2019}
}

@article{wachs2022online,
  title={Online hate speech victimization and depressive symptoms among adolescents: The protective role of resilience},
  author={Wachs, Sebastian and G{\'a}mez-Guadix, Manuel and Wright, Michelle F},
  journal={Cyberpsychology, Behavior, and Social Networking},
  volume={25},
  number={7},
  pages={416--423},
  year={2022},
  publisher={Mary Ann Liebert, Inc., publishers 140 Huguenot Street, 3rd Floor New~…}
}

@misc{impacthate,
    title={Qualitative research into the impact of online hate},
    url={https://www.ofcom.org.uk/siteassets/resources/documents/research-and-data/online-research/impact-of-online-hate/qualitative-research-into-the-impact-of-online-hate}
}

@article{pluta2023exposure,
  title={Exposure to hate speech deteriorates neurocognitive mechanisms of the ability to understand others’ pain},
  author={Pluta, Agnieszka and Mazurek, Joanna and Wojciechowski, Jakub and Wolak, Tomasz and Soral, Wiktor and Bilewicz, Micha{\l}},
  journal={Scientific Reports},
  volume={13},
  number={1},
  pages={4127},
  year={2023},
  publisher={Nature Publishing Group UK London}
}

@misc{italian2025hate,
    title={Italian opposition file complaint over far-right party’s use of ‘racist’ AI images},
    url={https://www.theguardian.com/technology/2025/apr/18/italian-opposition-complaint-far-right-matteo-salvini-lega-racist-ai-images}
}

@misc{hatecrime,
    title={Hate Crimes Case Examples},
    url={https://www.justice.gov/hatecrimes/hate-crimes-case-examples}
}

@misc{hateoffline,
    title={Study finds persistent spike in hate speech on X},
    author={Kara Manke},
    url={https://news.berkeley.edu/2025/02/13/study-finds-persistent-spike-in-hate-speech-on-x/}
}

@article{lewis2019online,
  title={Online/offline continuities: Exploring misogyny and hate in online abuse of feminists},
  author={Lewis, Ruth and Rowe, Mike and Wiper, Clare},
  journal={Online othering: Exploring digital violence and discrimination on the Web},
  pages={121--143},
  year={2019},
  publisher={Springer}
}

@article{marques2023expression,
  title={The expression of hate in hate speech},
  author={Marques, Teresa},
  journal={Journal of Applied Philosophy},
  volume={40},
  number={5},
  pages={769--787},
  year={2023},
  publisher={Wiley Online Library}
}

@article{vendrell2021hate,
  title={Hate: Toward a four-types model},
  author={Vendrell Ferran, {\'I}ngrid},
  journal={Review of Philosophy and Psychology},
  pages={1--19},
  year={2021},
  publisher={Springer}
}

@misc{eu2008,
    title={Framework Decision on combating certain forms and expressions of racism and xenophobia by means of criminal law},
    key={eu2008},
    url={https://eur-lex.europa.eu/legal-content/EN/TXT/?uri=LEGISSUM:l33178}
}

@misc{un,
    title={Understanding hate speech},
    key="un",
    url={https://www.un.org/en/hate-speech/understanding-hate-speech/what-is-hate-speech}
}

@misc{meta2025,
    title={Hateful Conduct},
    key={meta2025},
    url={https://transparency.meta.com/en-us/policies/community-standards/hateful-conduct/}
}

@misc{x2023,
    title={Hateful Conduct},
    key={x2023},
    url={https://help.x.com/en/rules-and-policies/hateful-conduct-policy}
}

@misc{youtube2025,
    title={Hate Speech Policy},
    key={youtube2025},
    url={https://support.google.com/youtube/answer/2801939}
}

@article{mcleod2023social,
  title={Social identity theory in psychology (Tajfel \& Turner, 1979)},
  author={Mcleod, Saul},
  journal={Simply psychology},
  year={2023}
}

@article{burnap2016us,
  title={Us and them: identifying cyber hate on Twitter across multiple protected characteristics},
  author={Burnap, Pete and Williams, Matthew L},
  journal={EPJ Data science},
  volume={5},
  number={1},
  pages={11},
  year={2016},
  publisher={Springer}
}

@article{alorainy2019enemy,
  title={“The enemy among us” detecting cyber hate speech with threats-based othering language embeddings},
  author={Alorainy, Wafa and Burnap, Pete and Liu, Han and Williams, Matthew L},
  journal={ACM Transactions on the Web (TWEB)},
  volume={13},
  number={3},
  pages={1--26},
  year={2019},
  publisher={ACM New York, NY, USA}
}

@inproceedings{hewitt2019structural,
  title={A structural probe for finding syntax in word representations},
  author={Hewitt, John and Manning, Christopher D},
  booktitle={Proceedings of the 2019 Conference of the North American Chapter of the Association for Computational Linguistics: Human Language Technologies, Volume 1 (Long and Short Papers)},
  pages={4129--4138},
  year={2019}
}

\appendix

\section{Details of Evaluation.}
\label{sec:app_evaluation}
In this section, we provide details of baselines and implementation.
\subsection{Baselines}
\label{sec:app_baselines}
The detailed description of the comparison baseline is as follows:
\begin{itemize}
    \item \textbf{CLIP Text-Only:} We use the CLIP text encoder as a unimodal text-only model, without additional visual information. After obtaining the text embedding, it is fed into an MLP to directly predict the binary classification task.
    \item \textbf{CLIP Image-Only:} We only use the CLIP image encoder to extract image embeddings as input for the downstream multilayer perceptron (MLP) classifier.
    \item \textbf{PromptHate~\cite{cao2023prompting}:} Leverages PLMs to reframe multimodal detection as masked token prediction task.
    \item \textbf{HateCLIPper~\cite{kumar2022hate}:} Uses CLIP to extract multimodal features, combined with various feature fusion methods.
    \item \textbf{ISSUES~\cite{burbi2023mapping}:} Relies on Textual Inversion~\cite{gal2022image} to enhance multimodal features, while a two-stage training strategy adapts the pre-trained model to hateful meme classification.
    \item \textbf{ExplainHM~\cite{lin2024towards}:} An explainable harmful meme detection approach where LLMs generate contradictory rationales, and a tunable language model judges meme harmfulness based on these rationales. 
    \item \textbf{Zero-Shot}: We use the same $IE(\cdot)$ and LLM to directly decode without training. Given the prompt \textit{``<$\mathcal{M}$> This meme is hateful or not?''}, the prediction depends on whether the response includes ``hateful'' or ``benign.''
    \item \textbf{Fine-tuned MLP:} The SPM variant without a fine-tuned LLM. The obtained $h_{SP}$ is used as the input to a downstream MLP classifier, with only the classifier being trained.
\end{itemize}

\subsection{Implementation Details.}
\label{sec:app_implementation}
We use the GCN~\cite{kipf2016semi} as the GNN model in CRM. The LLM backbone is \textit{meta-llama/Llama-2-7b-chat-hf}; the image encoder $IE(\cdot)$ is \textit{openai/clip-vit-large-patch14-336}, combined with the pre-trained LLaVA projection layer; and the text encoder $TE(\cdot)$ uses the same LLM with pre-trained LoRA~\cite{behnamghader2024llm2vec}; The downstream classifier is an MLP. The hyperparameter $K$ in Eq.~\ref{eq:topk} is tuned based on performance, as detailed in the Parameter Sensitivity Analysis (Appendix~\ref{sec:parameter}). Models are evaluated on the validation set after each epoch, and the best one on the validation set is used for testing. The reported results are based on this test evaluation. All experiments are conducted on 10 Quadro RTX 6000 GPUs.

\section{Parameter Sensitivity Analysis.}
\label{sec:parameter}
In this section, we examine the sensitivity of the hyperparameter $K$ in CRM, which determines the number of patch nodes connected to each token node. All experiments share the same configuration except for the value of $K$, which we vary among \{1, 4, 8, 16\} to assess its influence on model performance across the three datasets.

As shown in Table~\ref{tab:exp3}, the model exhibits noticeable sensitivity to $K$. When $K$ is larger, each token may connect to more irrelevant patches, introducing noise that degrades performance. Conversely, when $K$ is small, fewer patches are connected to each token, which may lead to incomplete token-patch associations, making it difficult for the model to capture their relationships. Additionally, the optimal choice of $K$ varies across different datasets, depending on factors such as meme complexity and image quality. Overall, performance consistently peaks when $K$ lies between 4 and 8, striking a balance between coverage and noise suppression.

\begin{table*}[thbp]
\caption{\small Sensitivity of model performance on the parameter $K$}
\label{tab:exp3}
\centering
\begin{tabular}{lccc|ccc|ccc}
\hline
Dataset                               & \multicolumn{3}{c}{FHM}                                                              & \multicolumn{3}{c}{Harm-C}                                                           & \multicolumn{3}{c}{Harm-P}                                                     \\ \hline 
\multicolumn{1}{c|}{K}           & AUC                 & Accuracy            & Macro-F1            & AUC                 & Accuracy            & Macro-F1            & AUC                                & Accuracy            & Macro-F1            \\ \hline \hline
\multicolumn{1}{c|}{1} & \underline{87.64} & 80.09          & 78.95          & 93.42          & 88.33          & 87.30          & \underline{70.36} & 64.06          & 63.43           \\
\multicolumn{1}{c|}{4}  & 87.51          & \underline{80.12} & \underline{79.14} & \textbf{93.95} & \textbf{89.61} & \textbf{88.89} & \textbf{72.58} & \textbf{67.60} & \textbf{67.42}          \\ 
\multicolumn{1}{c|}{8}            & \textbf{87.69} & \textbf{80.39} & \textbf{79.28} & \underline{93.60} & 87.68          & 86.28          & 69.83          & \underline{66.16} & \underline{63.68} \\
\multicolumn{1}{c|}{16}            & 83.35          & 76.06          & 74.28          & 93.45          & \underline{89.37} & \underline{88.67} & 68.80          & 64.06          & 63.34
\\ \hline 
\end{tabular} 
\end{table*}

\section{Specificity and Generalization.}
\label{app:sepcificity}
In real-world hateful memes detection, the targets and topics of attack vary widely. Thus, two key aspects must be considered: (1) Specificity---whether a model overfits to certain hate targets; and (2) Generalization---whether it captures invariant properties of hate, such as presupposed context and false claims. A model capable of identifying such invariants should generalize across different domains of hate. To evaluate these points, we use Harm-C (focused on COVID-19) and Harm-P (focused on politics). We combine them into a unified dataset, Harm-C + Harm-P, train models on this mixture, and test them separately on the test splits of Harm-C, Harm-P, and the combined dataset.

As shown in Table~\ref{tab:exp4}, several findings emerge: (1) Compared with Table~\ref{tab:exp1}, training on a more heterogeneous dataset, i.e., one that includes a broader range of hate targets, generally reduces performance on individual domains, suggesting a trade-off between specialization and generalization. CLIP-based models such as ISSUES and HateCLIPper experience smaller performance drops on Harm-P, indicating a stronger specialization in detecting political hate. (2) \textsf{SHIELD} performs competitively with ISSUES on both datasets, achieving a 0.85 higher AUC on Harm-C but 0.97 lower on Harm-P. However, on the combined dataset, \textsf{SHIELD} surpasses ISSUES by 2.12 AUC, demonstrating that \textsf{SHIELD} better captures transferable hate-related features and maintains higher discriminative ranking across diverse targets. Overall, these findings suggest that \textsf{SHIELD} achieves a more balanced and robust trade-off between specificity and generalization.

\begin{table*}[t]
\caption{\small Model specificity and generalization across hate targets. Harm-C + Harm-P denotes the combined dataset. Models are trained on the combined dataset and evaluated on the test sets of Harm-C, Harm-P, and Harm-C + Harm-P.}
\label{tab:exp4}
\centering
\footnotesize
\begin{tabular}{lccc|ccc|ccc}
\hline
Dataset  & \multicolumn{3}{c}{Harm-C}                                                           & \multicolumn{3}{c}{Harm-P}        & \multicolumn{3}{c}{Harm-C + Harm-P}                                             \\ \hline
\multicolumn{1}{l|}{Model}                      & AUC                 & Acc.           &F1            & AUC                                & Acc.            &F1    & AUC                                & Acc.            &F1          \\ \hline \hline
\multicolumn{1}{l|}{PromptHate}   & 86.01          & 78.11          & 73.85 & 61.24          & 58.08          & 56.70 & 75.93 & 67.81 & 65.36                                   \\
\multicolumn{1}{l|}{HateCLIPper}     & 89.56  & 82.04          & 80.07          & \underline{68.13}           & \underline{60.32}          & 57.25 & 79.44 &\underline{71.74} & 69.36        \\
\multicolumn{1}{l|}{ISSUES}          & \underline{91.01}          & \underline{82.94}           & \underline{81.08}          & \textbf{68.55}          & 58.56          & 54.10 & \underline{80.37} & 71.39 & 68.43         \\
\multicolumn{1}{l|}{ExplainHM}       & 87.51          & 80.34 & 79.40 & 62.34          & 59.22          & \underline{58.25}  & 77.80 & 70.27 & \underline{70.02}        \\ \hdashline
\multicolumn{1}{l|}{\textsf{SHIELD}}  & \textbf{91.86}          & \textbf{85.32}          & \textbf{84.54}          & 67.58 & \textbf{61.74} & \textbf{60.84} & \textbf{82.48} & \textbf{73.96} & \textbf{73.46}          
\\  \hline  
\end{tabular}
\vspace{-4mm}
\end{table*} 

\begin{table*}[ht]
\centering

\caption{Performance on the Fake News Classification Task.}
\label{tab:exp5_detail}
\begin{tabular}{clcccccccc}
\hline
\multirow{2}{*}{\centering Dataset} &\multirow{2}{*}{Model} &\multirow{2}{*}{Accuracy}  & \multirow{2}{*}{Macro-F1} & \multicolumn{3}{c}{Fake News} &\multicolumn{3}{c}{Real News} \\
\cmidrule{5-10}
& & & & Precision & Recall & F1-score & Precision & Recall & F1-score \\
\midrule
\multirow{4}{*}{ReCOVery}  
& SAFE     & \underline{93.95}          & \underline{85.77}          & 86.24          &\underline{66.67}          & \underline{74.97}          & \underline{94.91}          & 98.29 & \underline{96.56}          \\
                           & MCAN     & 85.72          & 56.99          & \textbf{100}   & 11.67          & 20.52         & 87.71          & \textbf{100}            & 93.45          \\
                           & CAFE     & 88.70          & 67.68          & 75.80         & 30.84         & 41.63         & 89.96          & 97.89          & 93.73          \\ \cdashline{2-10}
                           & \textsf{SHIELD}     & \textbf{97.61} & \textbf{95.00} & \underline{91.39} & \textbf{91.67} & \textbf{91.38} & \textbf{98.68} & \underline{98.55} & \textbf{98.61} \\
                           \midrule
\multirow{4}{*}{GossipCop} 
& SAFE     & \underline{87.67}          & \underline{78.68}          & 74.10          & \underline{57.65}          & \underline{64.82}          & \underline{90.15}          & \underline{95.04} & \underline{92.53}          \\
                           & MCAN     & 81.49          & 51.12          & \textbf{91.33} & 6.79           & 12.60          & 81.36          & \textbf{99.82} & 89.65          \\
                           & CAFE     & 82.62          & 66.13          & 61.04          & 32.59          & 42.49          & 85.16          & 94.90          & 89.76          \\ \cdashline{2-10}
                           & \textsf{SHIELD}     & \textbf{89.00} & \textbf{81.60} & \underline{76.01} & \textbf{65.03} & \textbf{69.94} & \textbf{91.72} & 94.88          & \textbf{93.26} \\ \bottomrule 

\end{tabular}
\end{table*}

\section{Details of Fake News Classification.}
\label{sec:fake_news}
In this section, we provide details of fake news dataset and results.
\subsection{Dataset}
The description of the fake news datasets are as follows:
\begin{itemize}
    \item ReCOVery~\cite{zhou2020recovery} is a collection of COVID-related fake news. The authors first identify reliable and unreliable news sources, and then crawl COVID-19 news articles from these sites to construct the dataset.
    \item GossipCop~\cite{shu2020fakenewsnet} is a fact-checking website for entertainment news aggregated from multiple media outlets. It provides rating scores on a scale of 0–10 to assess the degree of truthfulness, where lower scores indicate more deceptive content. In constructing the dataset, the authors selected fake news items with scores below 5 and paired them with real news sampled from E! Online, a reputable and verified entertainment source.
\end{itemize}
The summary for datasets is shown in Table~\ref{tab:data2}

\begin{table}[htbp]
    \centering
    \caption{Summary of Fake News datasets.}
    \label{tab:data2}
    \begin{tabular}{lccc}
         \hline
          & & ReCOVery & GossipCop \\ \hline
          \multirow{2}{*}{Train} & \#Real & 867                     & 6311                       \\
                       & \#Fake & 158                     & 1627                       \\ \hline
\multirow{2}{*}{Valid} & \#Real & 126                     & 905                        \\
                       & \#Fake & 20                      & 229                        \\ \hline
\multirow{2}{*}{Test}  & \#Real & 249                     & 1821                       \\
                       & \#Fake & 43                      & 447  \\ \hline 
    \end{tabular} 
\end{table}

\subsection{Implementation Details.}
Here, there are three key points to note: (1) MCAN and CAFE did not include a validation set in their original paper; their reported results were the best performance on the test set after each training epoch. In this paper, to ensure a fair and effective comparison of the model, we introduced a validation set. The best-performing model on the validation set was then used for testing. (2) When using \textsf{SHIELD}, we truncated the text in the ReCOVery and GossipCop datasets to the first 150 and 50 words, respectively, to fit the local GPU memory capacity. (3) These datasets only provide image links, many of which have become invalid over time. As a result, the dataset statistics differ from those reported in previous works~\cite{zhou2023multimodal,zhou2023linguistic}.

\section{Analysis of the Reference Graph.}
\label{app:analysis_graph}
While constructing an explicit reference graph can improve detection performance, its effectiveness may be limited in certain cases. In the false claim detection scenario, the label of a graph may change drastically with only minor node-level modifications. For example, in Figure~\ref{fig:framework}, replacing the word “capturing” with “helping” results in a non-false statement, implying that the corresponding graph embedding should ideally be entirely different. Understanding whether such sensitivity is achievable, and to what extent, is crucial for evaluating the applicability of this module. 

Suppose flipping the sign of a node embedding changes the graph’s meaning. Let $\hat{L} \in \mathbb{R}^{n \times n}$ denote the Laplacian matrix of the reference graph, where $n$ is the number of nodes, and node $v_0$ with feature $h_{v_0}$ is switched to $-h_{v_0}$. 
Assume a $K$-layer GCN with mean-pooling readout and no nonlinear activation for analytical simplicity. The binary classification result is $l = u^{\mathrm{T}} s$, where $u$ is the classifier weight and $s$ is the graph embedding. We claim the following:

\begin{theorem}
The reference graph is discriminative if:
\begin{equation}
    \|\delta\|_2 > \frac{n |l|}{\mathbf{1}^{\mathrm{T}} \hat{L}^{(K)} \mathbf{e}_{v_0} \|P u\|_2},
\end{equation}
where $\mathbf{e}_{v_0}$ is a one-hot vector indicating node $v_0$, $\delta = -2 h_{v_0}$, and $P = \prod_t W^{(t)}$ denotes the product of layer-wise GCN weights $W^{(t)}$.

\end{theorem}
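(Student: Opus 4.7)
The plan is to reduce the $K$-layer linear GCN to a single matrix expression, express the logit change caused by flipping $h_{v_0}$ as a clean bilinear form, and then read off the stated bound via the equality case of Cauchy--Schwarz.

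First I would unroll the network. With no nonlinearities, the $K$ propagation steps collapse to $H^{(K)} = \hat{L}^{(K)} H P$, where $H$ stacks the initial node features row-wise and $P = \prod_t W^{(t)}$ telescopes the layer weights. Mean-pooling followed by the linear classifier then yields $s = \tfrac{1}{n} \mathbf{1}^{\mathrm{T}} \hat{L}^{(K)} H P$ and $l = s u = \tfrac{1}{n} \mathbf{1}^{\mathrm{T}} \hat{L}^{(K)} H P u$. Flipping the $v_0$-th row corresponds to the rank-one update $H \mapsto H + \mathbf{e}_{v_0} \delta^{\mathrm{T}}$ with $\delta = -2 h_{v_0}$, so after cancelling the unperturbed terms the logit change factorizes as
\[
\Delta l \;=\; \frac{1}{n}\,\bigl(\mathbf{1}^{\mathrm{T}} \hat{L}^{(K)} \mathbf{e}_{v_0}\bigr)\,\bigl(\delta^{\mathrm{T}} P u\bigr),
\]
cleanly separating the graph-structural reach of $v_0$ through $K$ hops from the feature--classifier contraction. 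Then I would declare the graph discriminative when the perturbation is strong enough to push the logit across zero, i.e.\ $|\Delta l| \geq |l|$. Substituting the factorization above and isolating the feature--classifier factor reduces the criterion to $|\delta^{\mathrm{T}} P u| \geq n |l| / (\mathbf{1}^{\mathrm{T}} \hat{L}^{(K)} \mathbf{e}_{v_0})$. Maximizing the left side over orientations of a norm-$\|\delta\|_2$ vector yields $\|\delta\|_2 \|P u\|_2$ by Cauchy--Schwarz, with equality when $\delta$ is parallel to $P u$; substituting this extremum and rearranging gives exactly the stated threshold on $\|\delta\|_2$.

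The main obstacle---and precisely where the previous attempt unravelled---is that Cauchy--Schwarz supplies only an \emph{upper} envelope on $|\delta^{\mathrm{T}} P u|$, so, read literally, the derivation yields a necessary condition rather than the claimed sufficient one. My plan is to address this by reading the theorem as a capacity statement: the bound identifies the smallest $\|\delta\|_2$ at which some orientation of the flipped feature could conceivably tip the classifier, and the graph is called ``discriminative'' when that capacity threshold is crossed. Promoting this to a literal ``if'' requires the auxiliary hypothesis that $-2 h_{v_0}$ is (approximately) aligned with $P u$, which I would state explicitly and motivate by training-induced alignment between the informative flipped feature and the classifier direction; without some such assumption the strict inequality in the statement cannot be strengthened past necessary, and I would flag this openly in the proof.
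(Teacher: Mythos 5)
Your derivation is exactly the paper's: the same rank-one perturbation $\Delta H^{(0)}=\mathbf{e}_{v_0}\delta^{\mathrm{T}}$, the same unrolled linear propagation giving $\Delta l = \frac{1}{n}\bigl(\mathbf{1}^{\mathrm{T}}\hat{L}^{(K)}\mathbf{e}_{v_0}\bigr)\bigl(\delta^{\mathrm{T}}Pu\bigr)$, the same flip criterion $|\Delta l|>|l|$, and the same Cauchy--Schwarz step. So in terms of approach you have reproduced the intended proof.

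The logical worry you flag is real, and it is present in the paper's own argument as well: the paper writes $|l|<|\Delta l|\leq \frac{\mathbf{1}^{\mathrm{T}}\hat{L}^{(K)}\mathbf{e}_{v_0}}{n}\|\delta\|_2\|Pu\|_2$ and then concludes the theorem, which (read literally) establishes the norm bound as a \emph{necessary} consequence of the flip, not a sufficient condition for it, since Cauchy--Schwarz only upper-bounds $|\delta^{\mathrm{T}}Pu|$. The paper does not add the alignment hypothesis you propose; it simply asserts the ``if'' direction, implicitly treating the bound as the capacity threshold below which no orientation of $\delta$ can flip the prediction. Your explicit repair---either stating the theorem as ``discriminative only if'' / a capacity statement, or adding the assumption that $\delta=-2h_{v_0}$ is aligned with $Pu$ (e.g.\ motivated by training)---is therefore not a deviation from the paper but a more careful rendering of the same argument; it is the right way to make the statement literally correct, and you should keep that caveat in your write-up.
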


\begin{proof}
The initial feature difference can be expressed as $\Delta H^{(0)} = \mathbf{e}_{v_0} \delta^\mathrm{T}$. 
After $K$ propagation layers, the difference in node embeddings becomes $\Delta H^{(K)} = (\hat{L}^{K} \mathbf{e}_{v_0})(\delta^\mathrm{T} P)$. 
The resulting change in the graph embedding is:
\begin{equation}
    \Delta s = \frac{1}{n} \sum_{i=1}^n \Delta h_i^{(K)} = \frac{(\mathbf{1}^{\mathrm{T}} \hat{L}^{(K)} \mathbf{e}_{v_0})(\delta^\mathrm{T} P)}{n}.
\end{equation}
Accordingly, the change in the classification result is:
\begin{equation}
    \Delta l = u^{\mathrm{T}} \Delta s = \frac{(\mathbf{1}^{\mathrm{T}} \hat{L}^{(K)} \mathbf{e}_{v_0})(\delta^\mathrm{T} P u)}{n}.
\end{equation}
Assume $l > 0$. An ideal model should flip the original prediction, i.e., (1) $\mathrm{sign}(\Delta l) \neq \mathrm{sign}(l)$, and (2) $|\Delta l| > |l|$. 
By the Cauchy–Schwarz inequality, we have:
\begin{equation}
    |l| < |\Delta l| \leq \frac{\mathbf{1}^{\mathrm{T}} \hat{L}^{(K)} \mathbf{e}_{v_0}}{n} \|\delta\|_2 \|P u\|_2.
\end{equation}
Hence, the reference graph can achieve the ideal discriminative property if 
{\[
\|\delta\|_2 > \frac{n |l|}{\mathbf{1}^{\mathrm{T}} \hat{L}^{(K)} \mathbf{e}_{v_0} \|P u\|_2}.
\]}
\end{proof}

\end{document}